\pdfoutput=1
%
%
%
%

\documentclass{article}

\usepackage{blindtext} 

\usepackage[sc]{mathpazo} 
\usepackage[T1]{fontenc} 
\linespread{1.05} 
\usepackage{microtype} 

\usepackage[english]{babel} 

\usepackage[hmarginratio=1:1,top=32mm,columnsep=20pt]{geometry} 
\usepackage[hang, small,labelfont=bf,up,textfont=it,up]{caption} 
\usepackage{booktabs} 

\usepackage{lettrine} 

\usepackage{enumitem} 
\setlist[itemize]{noitemsep} 

\usepackage{abstract} 

\usepackage{titlesec} 
\titleformat{\section}[block]{\large\scshape\centering}{\thesection.}{1em}{} 
\titleformat{\subsection}[block]{\large}{\thesubsection.}{1em}{} 

\usepackage{fancyhdr} 
\pagestyle{fancy} 
\fancyfoot{} 
\fancyfoot[RO,LE]{\thepage} 

\usepackage{titling} 
\usepackage{hyperref} 
\hypersetup{
    colorlinks=true,
    linkcolor=blue,
    filecolor=magenta,      
    urlcolor=cyan,
}
\usepackage{xcolor}
\usepackage{tikz}
\usepackage{graphicx}
\usepackage{caption}
\usepackage{subcaption}
\usepackage{float}
\usepackage{etoolbox}
\usepackage{graphicx}
\usepackage{mathtools}
\usepackage{amsmath}
\usepackage{pgfplots}
\usepackage{amsmath}
\usepackage{tikz}
\usepackage{mathtools}
\usepackage{amsmath}
\usepackage{amsfonts}
\usepackage{amssymb}
\usepackage{amsthm}
\usepackage{commath}

\newtheorem{thm}{Theorem}
\theoremstyle{definition}

\newtheorem{lemma}{Lemma}

\usetikzlibrary{positioning,calc}
\usepackage{setspace}

\doublespacing
\setlength{\droptitle}{-4\baselineskip} 

\pretitle{\begin{center}\huge\bfseries} 
\posttitle{\end{center}} 
\title{Bio-plausible Unsupervised Delay Learning for Extracting Temporal Features in Spiking Neural Networks} 
\author{%
\textsc{Alireza Nadafian} \\[1ex] 
\normalsize University of Tehran \\ 
\normalsize \href{mailto:alirezanadafian@ut.ac.ir}{alirezanadafian@ut.ac.ir} 
\and 
\textsc{Mohammad Ganjtabesh} \\[1ex] 
\normalsize University of Tehran \\ 
\normalsize \href{mailto:mgtabesh@ut.ac.ir}{mgtabesh@ut.ac.ir} 
}
\date{} 



\begin{document}

\maketitle


\section{Introduction}

Devising computational models inspired from the brain are becoming more attractive every day. The reason for that is the ability of these kinds of models to mimic the underlying brain mechanisms in different tasks. Also, from the artificial intelligence (AI) and machine learning (ML) points of view, they can be adopted for solving many real life problems and widespread applications. Although there are a lot of improvements in these models, we are still far from reaching an accurate model for simulating the brain. This is mostly related to the lack of experimental data about the underlying mechanisms in the brain, where developing mathematical or computational models would be of great importance.

Artificial Neural Networks (ANNs) are one category of these brain-inspired models. The origin of these models goes back to the works of McCulloch and Pitts that presented a simple computational model for a neuron with logical circuits \cite{mcculloch:1943}. Their proposed neuron model receives some input signals and provides a spike when the sum of them become greater than a specified threshold. By composing these artificial neurons, connected by synapses, we can get an ANN that applies a transformation on the inputs and it can be used for a wide range of tasks. The core feature of almost all kind of ANNs is their ability to learn. Learning is an exceptional feature that can be described by  chemical processes in the brain which results in modifying the connections strength between neurons. The ability of learning is the basis of every complex and high-level cognitive function of our brain, including reasoning, language, memory and decision making. 

In 1949, Donald Hebb introduced a theory for the learning of the synaptic strength that claims \cite{hebb:1949}: "The neurons that fire together wire together". This theory became the basis of many learning rules such as Spike-Timing Dependant Plasticity (STDP) \cite{stdp:2008}. It has been shown that STDP contributes to set proper synaptic weights for learning the repeating patterns \cite{stdp_rep:2001} in an unsupervised manner, therefore it has been used in many ANNs with different structures such as Spiking Neural Networks (SNNs) for various tasks.

Delays in sending or receiving information through axons and synapses have an important role in the functional behavior of the brain and many of our daily activities. The ability to walk, play a musical instrument, and vision are all dependent on the conduction delay between the neurons. Besides, many computational studies emphasize the need for conduction delays in providing some stable and reproducible firing patterns. An example is the polychronization concept that was proposed by Izhikevich \cite{polychron}. He indicated that the conduction delay between the neurons is an important property for the creation of the polychronous groups which can explain the high-level functions of the brain.

Synaptic delay plasticity plays an important role in learning and understanding spatio-temporal patterns in the brain \cite{myelin_p}. Some evidences suggest that the underlying process of synaptic delay plasticity, known as myelination, can be influenced by action potentials and neuronal activities \cite{myelin_activity1, myelin_activity2}. This activity-dependent property of myelination can contribute to the learning of repeating temporal patterns in many time-dependent tasks \cite{myelin_piano}. It has been shown that these delays could be adjusted with respect to the pattern of neural impulses, again in an unsupervised manner \cite{delay_change:1998}. Despite the important role of the synaptic delays, most of the computational models used to ignore them to avoid extra delay-related complexity. However, ignoring the delays makes these computational models ineffective in providing aligned insights with the experimental evidences. Besides the delays, devising a rule for adjusting them is an important problem. Recent studies suggest some supervised gradient based delay learning rules, using multiple synaptic delays, and unsupervised rule using EM algorithm \cite{myelin_recent1, myelin_recent2, myelin_recent3, myelin_recent4}. However, all the mentioned learning rules are not biologically plausible.

In this paper, we propose an unsupervised learning rule for adjusting the synaptic delays which only depends on the spike timing of pre- and post-synaptic neurons. After that, we present some mathematical proofs to show the ability of our proposed rule in learning the spatio-temporal patterns. Finally, we apply our delay learning rule on an STDP-based spiking neural network in an experiment to investigate its ability to learn repeating spatio-temporal patterns.


\section{Materials and Methods}
The experimental evidence has shown that the repetitive activation of two connected neurons can induce long term changes in the synaptic connection between them, which is considered as the basis of the learning related processes in the brain \cite{bio_stdp}. Spike-Timing Dependant Plasticity (STDP) is a biologically plausible learning rule that contributes to the adjustment of synaptic strength between two connected neurons. If a post-synaptic neuron fires at a specific time, then STDP makes the synapses of the pre-synaptic neurons that fired before stronger, and those fired after, weaker. This is the reason for the selective property of STDP and somehow justifies its ability to learn repeating patterns \cite{stdp_rep:2001}. Normally, the synaptic delay ($d$) is not considered in firing time difference ($\Delta t$) of typical STDP, but in the case of existing delays over synapses, we need to take it into account so that the STDP correctly adjusts the synaptic weights by considering the effect of the delay on the spike timing of the post-synaptic neuron. Therefore, for two connected neurons $i$ and $j$, the STDP adjusts their synaptic weight $w_{i,j}$ with respect to the firing time difference as well as the current synaptic delay $(d_{i,j})$. The equations for delay-related STDP are as follow:
\begin{align}
\begin{aligned}
    \nonumber \Delta t_{i, j} =& t_\text{j} - t_\text{i}- d_{i, j},\text{ and} \\
    \Delta w_{i, j} = F(\Delta t_{i, j}) =& \begin{cases}
    A_+ \exp{\frac{-\Delta t_{i, j} }{\tau _+}} & \text{ $\Delta t_{i, j}  \geq 0$}, \\ 
    -A_- \exp{\frac{\Delta t_{i, j} }{\tau _-}} & \text{ $\Delta t_{i, j}  < 0$},
    \end{cases}
\end{aligned}
\end{align}
where $A_+$ and $A_-$ are the parameters that control the amount of synaptic potentiation and depression, respectively. Also, $\tau _+$ and $\tau _-$  are the time windows of STDP, respectively for synaptic potentiation and depression. The schematic representation of delay-related STDP is provided in Fig. \ref{fig:1}. 

\begin{figure}[h]
       \begin{subfigure}{.5\textwidth}
        \centering
         \begin{tikzpicture}
          \draw[->] (-2.75,0) -- (2.75,0) node[right] {$\Delta t_{i, j}$};
          \draw[->] (0,-2) -- (0,2) node[above] {$\Delta w_{i, j}$};
          \draw[dashed][scale=0.5,domain=-5:0,smooth,variable=\x] plot ({\x},{-3* 2.7 ^ (\x / 1.5)});
          \draw[dashed][scale=0.5,domain=0:5,smooth,variable=\x] plot ({\x},{3* 2.7 ^ (-\x / 1.5)});
          \draw[fill=white] (0,-1.5) circle (2pt) (0,-1.5) node[anchor=west] {$A_{-}$};
          \filldraw[black] (0,1.5) circle (2pt) node[anchor=east] {$A_{+}$};
        \end{tikzpicture}

    \end{subfigure}
    \begin{subfigure}{0.5\textwidth}
        \centering
                        \begin{tikzpicture}[
                roundnode/.style={circle, draw=black!60, very thick, minimum size=7mm}]
            \node[roundnode, label ={[align=left]below: pre-synaptic\\neuron}](pre){i};
            \node[roundnode, label = {[align=left]below: post-synaptic\\neuron}](post)[right=2cm of pre] {j};
            
            \draw[bend left,->]  (pre) to node [auto] {$w_{i, j}$, $d_{i, j}$} (post);

            \end{tikzpicture}
         \end{subfigure}
        \caption{Schematic representation of delay-related STDP function.}
        \label{fig:1}

\end{figure}
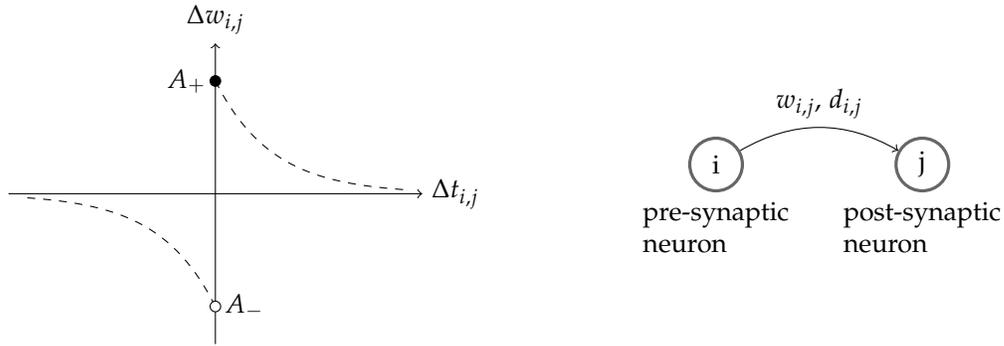

\subsection{Delay Learning}
It has been shown that the conduction delay of information between two connected neurons can be changed during the time as a result of myelination process \cite{myelination}. The main purpose of myelination process is to further enhance the conduction speed over the synapse \cite{myelin}. The dynamic nature of the myelination process in the brain has an important role in synaptic plasticity and learning spatio-temporal features \cite{myelin_p}. 

The main question is how we could adjust the synaptic delays for extracting the appropriate spatio-temporal features? 
To answer this question and devise a new learning rule for adjusting synaptic delays, we claim that the synaptic delay should be decreased (increased) as the synaptic weight is potentiated (depressed). The biological evidence for this claim is as follows \cite{myelin_piano, myelin_speed}. When we start learning a new skill (say, driving), at the very beginning stages, our reactions take longer time to be performed and they become faster as we continue repeating the task. Part of this speed-up is related to the potentiation of appropriate synapses that increase the synaptic efficacy of an appropriate neuron on the corresponding post-synaptic neuron that makes it to fire sooner.
But, at the same time, synaptic delays are decreased for those potentiated synapses, making the post-synaptic neuron to fire even sooner.

In this paper, we consider conduction delay as a floating point number for each synapse in spiking neural networks. Our proposed rule is somehow similar to the STDP learning rule but in negative polarization. Therefore, it locally depends on the firing times of two connected neurons in an unsupervised manner.  The equations for our proposed delay learning rule are as follow:
\begin{align}
\begin{aligned}
    \nonumber  \Delta t_{i,j} =& t_{j} - t_{i}- d_{i,j},\text{ and} \\
 \Delta d_{i,j} = G(\Delta t_{i,j}) =& \begin{cases}
 -B_- \exp{\frac{-\Delta t_{i,j} }{\sigma_-}} & \text{ $\Delta t_{i,j}  \geq 0$}, \\ 
 B_+ \exp{\frac{\Delta t_{i,j} }{\sigma_+}} & \text{ $\Delta t_{i,j}  < 0$}, 
 \end{cases}
\end{aligned}
\end{align}
where  $B_+$ and  $B_-$ are the parameters that control the value of synaptic delay reduction and increament, respectively. Also, $\sigma_+$ is the time constant of delay learning rule in the case of the pre-synaptic neuron causal effect on the firing of post-synaptic neuron and $\sigma_-$ is the time constant for the case of independency of the firings of two connected neurons.
 The schematic representation of our proposed delay learning rule  is provided in Fig. \ref{fig:2}.

\begin{figure}[h]
   \begin{subfigure}{.5\textwidth}
    \centering
         \begin{tikzpicture}
              \draw[->] (-3,0) -- (3,0) node[right] {$\Delta t_{i, j}$};
              \draw[->] (0,-2.5) -- (0,2.5) node[above] {$\Delta d_{i, j}$};
              \draw[dashed][scale=0.5,domain=-6:0,smooth,variable=\x] plot ({\x},{3* 2.7 ^ (\x / 1.5)});
              \draw[dashed][scale=0.5,domain=0:6,smooth,variable=\x] plot ({\x},{-3* 2.7 ^ (-\x / 1.5)});
              \draw [fill=white] (0,1.5) circle (2pt) node[anchor=west] {$B_{+}$};
              \filldraw[black] (0,-1.5)  circle (2pt) node[anchor=east] {$B_{-}$};
        \end{tikzpicture}
    \end{subfigure}
    \begin{subfigure}{0.5\textwidth}
        \centering
        \begin{tikzpicture}[
                roundnode/.style={circle, draw=black!60, very thick, minimum size=7mm}]
            \node[roundnode, label ={[align=left]below: pre-synaptic\\neuron}](pre){i};
            \node[roundnode, label = {[align=left]below: post-synaptic\\neuron}](post)[right=2cm of pre] {j};
            
            \draw[bend left,->]  (pre) to node [auto] {$w_{i, j}$, $d_{i, j}$} (post);

            \end{tikzpicture}
         \end{subfigure}
    \caption{Schematic representation of our proposed delay learning function.}
    \label{fig:2}
\end{figure}
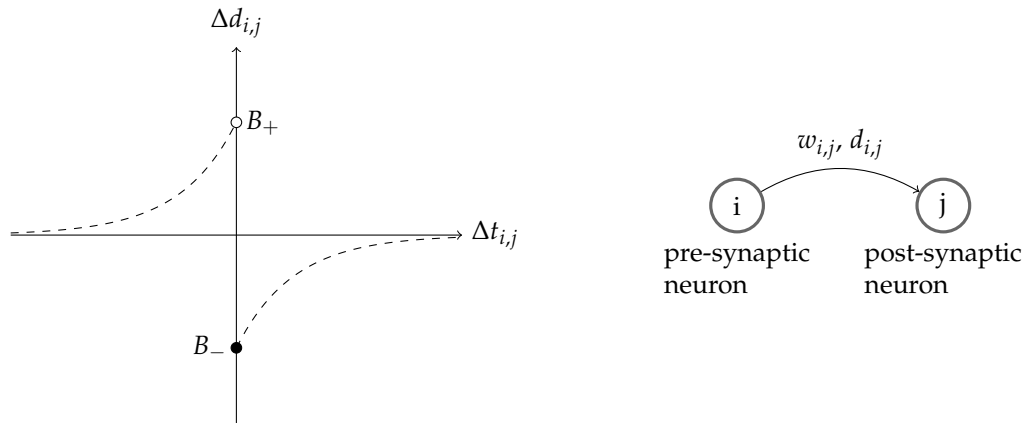

Now, from the mathematical point of view, we illustrate how our proposed rule contributes to learning the spatio-temporal features. In an ideal situation, where the synaptic weights and delays of a post-synaptic neuron perfectly match to a spatio-temporal feature, it is expected that  the time difference in Fig. \ref{fig:2}  becomes zero for all active pre-synaptic neurons. In other words, all the delays are adjusted perfectly in such a way that the post-synaptic neuron receives all the spikes of its pre-synaptic neurons at the same time, causing the post-synaptic neuron to be fired, according to delays.

For a specific post-synaptic neuron, say $j$, let $PRE_j$ denotes the set of its pre-synaptic neurons, i.e. $PRE_j = \{i|\text{ } i$ \text{is a pre-synaptic neuron connected to neuron }$j\}$. For each neuron $i\in PRE_j$, let $w_{i,j}$ and $d_{i,j}$ denote the synaptic weight and delay between neuron $i$ and $j$, respectively. Now, suppose that neuron $j$ emits a spike at time $t_j$ due to a specific spike pattern $P$ and let $$\Delta t_{i,j} = t_j - t_i - d_{i,j},$$ where $t_i$ is the firing time of pre-synaptic neuron $i$. Also, let $A_j(t_j)$ denotes those active pre-synaptic neurons that contribute in the firing of neuron $j$ $(A_j(t_j) \subseteq PRE_j)$. Obviously, for all neurons in $A_j(t_j)$, we have $\Delta t_{i,j} \geq 0$.

At the firing time of neuron $j$, there is at least a pre-synaptic active neuron, say $l \in A_j(t_j)$, that causes the potential of neuron $j$ to reach the threshold and makes it to fire (i.e. $\Delta t_{l,j} = 0$). In other words, before the firing of neuron $j$, it receives the spike of neuron $l$ later than the other pre-synaptic active neurons. First, we show that the amount of delay change is no more than $\Delta t_{i,j}$ for any pre-synaptic active neuron $i$ with $\Delta t_{i,j} \geq 0 $.

\begin{lemma}
\label{lemma1}
Suppose that  $0 < B_- \leq \sigma_-$. For any $i \in A_j(t_j)$ with $ \Delta t_{i,j} \geq 0  $, we have $$B_- - B_- \exp{\frac{-\Delta t_{i,j}}{\sigma_-}} \leq \Delta t_{i,j} .$$
\end{lemma}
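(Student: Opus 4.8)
The plan is to reduce the claim to the elementary exponential inequality $1 - e^{-u} \le u$, valid for every $u \ge 0$. First I would set $u = \Delta t_{i,j}/\sigma_-$; since $\Delta t_{i,j} \ge 0$ and $\sigma_- > 0$ we have $u \ge 0$, so the inequality applies and yields
$$1 - \exp\!\left(\frac{-\Delta t_{i,j}}{\sigma_-}\right) \le \frac{\Delta t_{i,j}}{\sigma_-}.$$
Multiplying both sides by $B_- > 0$ gives
$$B_- - B_-\exp\!\left(\frac{-\Delta t_{i,j}}{\sigma_-}\right) \le \frac{B_-}{\sigma_-}\,\Delta t_{i,j},$$
and then the hypothesis $B_- \le \sigma_-$ forces $B_-/\sigma_- \le 1$. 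Because $\Delta t_{i,j} \ge 0$, I may therefore bound the right-hand side by $\Delta t_{i,j}$, which is exactly the asserted inequality.

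For completeness I would also supply the auxiliary bound $1 - e^{-u} \le u$ rather than cite it. The quickest route is to set $g(u) = u - 1 + e^{-u}$, note $g(0) = 0$, and observe $g'(u) = 1 - e^{-u} \ge 0$ for $u \ge 0$, so $g$ is nondecreasing and hence nonnegative on $[0,\infty)$. Equivalently, one can invoke the tangent-line inequality coming from convexity of $t \mapsto e^{-t}$ at the point $0$.

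There is no genuine analytic obstacle here: the whole argument is a two-line estimate. The step that actually carries the content is the use of the hypothesis $B_- \le \sigma_-$, which is precisely what turns the coefficient $B_-/\sigma_-$ into a quantity $\le 1$. Without that assumption the chain of inequalities would stop at $B_-\bigl(1 - e^{-\Delta t_{i,j}/\sigma_-}\bigr) \le (B_-/\sigma_-)\,\Delta t_{i,j}$, and the right-hand side could exceed $\Delta t_{i,j}$ when $B_- > \sigma_-$. So the only care required is in tracking where the hypothesis enters, not in the estimation itself.
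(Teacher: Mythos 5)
Your proof is correct and is essentially the paper's argument in a lightly refactored form: the paper applies the same derivative/monotonicity test directly to $f(\Delta t_{i,j}) = \Delta t_{i,j} - B_- + B_-\exp(-\Delta t_{i,j}/\sigma_-)$, using $B_-\le\sigma_-$ inside the derivative bound, whereas you first normalize to the parameter-free inequality $1-e^{-u}\le u$ and then bring in $B_-\le\sigma_-$ via the scaling factor $B_-/\sigma_-\le 1$. The hypothesis enters at the same logical point in both versions, so no further comment is needed.
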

\begin{proof} Let $f(\Delta t_{i,j}) = \Delta t_{i,j} - B_- +B_- \exp{\frac{-\Delta t_{i,j}}{\sigma_-}}$. Then, we have $f(0) = 0$ and $\frac{d f}{d \Delta t_{i,j}} \geq 0$, for $\Delta t_{i,j} \geq 0$, since:
\begin{align*}
    \frac{df}{d\Delta t_{i,j}} = 1 - \frac{B_-}{\sigma_-}exp(\frac{-\Delta t_{i,j}}{\sigma_-}) \geq 0 
\end{align*}
Therefore, for $ \Delta t_{i,j} \geq 0  $, we have $f(\Delta t_{i,j}) \geq 0$.

\end{proof}

According to Lemma \ref{lemma1}, if $ \Delta t_{i,j}$ is positive before updating the corresponding synaptic delay, it remains positive afterward. Now, we show that for a repeating pattern, say $P$, the last active neuron $l \in A_j(t_j)$ remains last after updating the synaptic delays.

\begin{lemma}
\label{lemma2}
For a repeating pattern $P$, suppose that $\Delta t^{old}_{l,j} = 0 $ for an active pre-synaptic neuron $l$. After updating the synaptic delays, we have $\Delta t_{l,j}^{new} = 0 $.
\end{lemma}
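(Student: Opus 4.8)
The plan is to reduce the claim to a single statement about the post-synaptic firing time. Since $\Delta t^{old}_{l,j} = 0$, the delay rule assigns $\Delta d_{l,j} = G(0) = -B_-$, so the updated delay is $d^{new}_{l,j} = d^{old}_{l,j} - B_-$. Because $P$ is a repeating pattern, the pre-synaptic firing time $t_l$ is identical across presentations, hence
\[
\Delta t^{new}_{l,j} = t^{new}_j - t_l - d^{new}_{l,j} = (t^{new}_j - t^{old}_j) + \Delta t^{old}_{l,j} + B_- = (t^{new}_j - t^{old}_j) + B_-.
\]
Thus it suffices to prove that neuron $j$ fires exactly $B_-$ earlier, i.e. $t^{new}_j = t^{old}_j - B_-$.

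First I would track the effective arrival time $a_i = t_i + d_{i,j}$ of each active pre-synaptic spike. Before the update $a^{old}_l = t^{old}_j$ (since $\Delta t^{old}_{l,j} = 0$), so $l$ is the last spike to reach $j$, while every other $i \in A_j(t_j)$ satisfies $a^{old}_i = t^{old}_j - \Delta t^{old}_{i,j}$. After the update each arrival shifts by $\Delta d_{i,j} = -B_- \exp(-\Delta t^{old}_{i,j}/\sigma_-)$, giving $a^{new}_l = t^{old}_j - B_-$ and $a^{new}_i = t^{old}_j - \Delta t^{old}_{i,j} - B_- \exp(-\Delta t^{old}_{i,j}/\sigma_-)$. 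Invoking Lemma \ref{lemma1} in the rearranged form $B_- \exp(-\Delta t^{old}_{i,j}/\sigma_-) \geq B_- - \Delta t^{old}_{i,j}$ yields $a^{new}_i \leq t^{old}_j - B_- = a^{new}_l$, so neuron $l$ remains the last (possibly tied) active spike to arrive at $j$.

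The decisive step is then to argue that $l$ is still the spike that drives the membrane potential across threshold, so that $j$ fires precisely at $a^{new}_l = t^{old}_j - B_-$. Since exactly the same set of active inputs with the same weights is delivered, and all of them now arrive no later than in the previous presentation while $l$ still arrives last, the accumulated potential just prior to $l$'s spike is unchanged relative to before; adding $l$'s contribution reaches the threshold exactly as it did before, now at time $a^{new}_l$. Substituting $t^{new}_j = t^{old}_j - B_-$ into the displayed identity gives $\Delta t^{new}_{l,j} = 0$, as required.

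I expect the main obstacle to lie in this last step, since it is the only point that depends on the sub-threshold dynamics of the neuron rather than on pure timing algebra. In particular one must rule out that the earlier-arriving inputs, taken together, push the potential over threshold strictly before $l$'s spike arrives, or that potential decay between spikes shifts the crossing time. I would close this gap by appealing to the idealized integrate-and-fire assumption already used to \emph{define} $l$, namely that the threshold is attained only upon arrival of the last contributing spike; under that hypothesis the timing argument above goes through cleanly.
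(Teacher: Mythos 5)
Your proof is correct and follows essentially the same route as the paper: the paper likewise reduces the claim to showing $\Delta t_{i,j}^{new} \geq \Delta t_{l,j}^{new}$ for all $i \in A_j(t_j)$ (equivalent to your arrival-time inequality $a_i^{new} \leq a_l^{new}$), derives it from the inequality of Lemma \ref{lemma1}, and then concludes $\Delta t_{l,j}^{new}=0$. The only difference is that you make explicit the idealized assumption that the neuron fires exactly upon arrival of the last contributing spike, which the paper uses silently in its ``it is sufficient to show'' step.
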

\begin{proof}
It is sufficient to show that $\forall i \in A_j(t_j)$,  $\Delta t_{i,j}^{new} \geq \Delta t_{l,j}^{new}$. To do this, we have:
\begin{align*}
 \Delta t_{i,j}^{new} - \Delta t_{l,j}^{new} =& t_j^{new}-t_i-d_{i,j}^{new} - t_j^{new}+t_l-d_{l,j}^{new} \\ =& t_l - t_i + d_{l,j}^{new} + d_{i,j}^{new} \\ 
 =& t_l - t_i + d_{l,j}^{old} -B_- - d_{i,j}^{old} + B_- \exp{\frac{-\Delta t_{i,j}^{old}}{\sigma_-}} \\
 =& t_j^{old}-t_i-d_{i,j}^{old}-B_- - t_j^{old}+t_l-d_{l,j}^{old}+ B_- \exp{\frac{-\Delta t_{i,j}^{old}}{\sigma_-}} \\
 =& \Delta t_{i,j}^{old} - \Delta t_{l,j}^{old} -B_- + B_- \exp{\frac{-\Delta t_{i,j}^{old}}{\sigma_-}} \\ 
=& \Delta t_{i,j}^{old} -B_- + B_- \exp{\frac{-\Delta t_{i,j}^{old}}{\sigma_-}} \geq 0 \\
\Rightarrow&\Delta t_{i,j}^{new} \geq \Delta t_{l,j}^{new}
\end{align*}
\end{proof}
Since the last neuron $l$ remains last after updating the synaptic delays, therefore we have: 
\begin{align*}
    &\Delta t_{l,j}^{new} - \Delta t_{l,j}^{old} = 0 \\ 
    \Rightarrow & t_j^{new} - t_l - d_{l,j}^{new} - t_j^{old} + t_l + d_{l,j}^{old} = 0 \\ 
    \Rightarrow & t_j^{new} - t_j^{old} = d_{l,j}^{new} - d_{l,j}^{old}
\end{align*}
As $\Delta t_{l,j}^{old} = 0 $, we have $d_{l,j}^{new} - d_{l,j}^{old} = -B_-$ (according to the delay learning rule). So we have: $$t_j^{new} - t_j^{old} = -B_-.$$

This means that for a repeating pattern P, the firing time of the post-synaptic neuron $j$ is shifted by an amount of $-B_-$, i.e. it fires sooner in the next repetition of pattern $P$. According to the synaptic delay learning rule, decreasing delay is done not only for the last neuron, but also for the other pre-synaptic active neurons. This is proved in the following theorem.

\begin{thm}
For all $i \in A_j(t_j)$, $0 \leq \Delta t_{i,j}^{new} \leq \Delta t_{i,j}^{old}$.
\end{thm}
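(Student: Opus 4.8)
The plan is to obtain a closed-form expression for $\Delta t_{i,j}^{new}$ as a function of $\Delta t_{i,j}^{old}$, and then check the two required inequalities separately. First I would combine two ingredients already available: the global firing-time shift $t_j^{new} - t_j^{old} = -B_-$ established just before the theorem, and the delay update rule, which for any active neuron with $\Delta t_{i,j}^{old} \geq 0$ reads $d_{i,j}^{new} = d_{i,j}^{old} - B_- \exp\!\left(\frac{-\Delta t_{i,j}^{old}}{\sigma_-}\right)$. Substituting both into $\Delta t_{i,j}^{new} = t_j^{new} - t_i - d_{i,j}^{new}$ should collapse everything to
\begin{align*}
\Delta t_{i,j}^{new} = \Delta t_{i,j}^{old} - B_- + B_- \exp\!\left(\frac{-\Delta t_{i,j}^{old}}{\sigma_-}\right).
\end{align*}
This single identity is the backbone of the argument, and both bounds are read off from it.

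For the upper bound, I would rewrite the correction as $\Delta t_{i,j}^{new} - \Delta t_{i,j}^{old} = -B_-\left(1 - \exp\!\left(\frac{-\Delta t_{i,j}^{old}}{\sigma_-}\right)\right)$. Since $B_- > 0$ and $\Delta t_{i,j}^{old} \geq 0$ forces $\exp\!\left(\frac{-\Delta t_{i,j}^{old}}{\sigma_-}\right) \leq 1$, the parenthesized factor is non-negative and the whole correction is non-positive, giving $\Delta t_{i,j}^{new} \leq \Delta t_{i,j}^{old}$ immediately.

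For the lower bound $\Delta t_{i,j}^{new} \geq 0$, I would observe that the required inequality $\Delta t_{i,j}^{old} - B_- + B_- \exp\!\left(\frac{-\Delta t_{i,j}^{old}}{\sigma_-}\right) \geq 0$ is, after rearranging, exactly the statement $B_- - B_- \exp\!\left(\frac{-\Delta t_{i,j}^{old}}{\sigma_-}\right) \leq \Delta t_{i,j}^{old}$ proved in Lemma \ref{lemma1} (under the standing hypothesis $0 < B_- \leq \sigma_-$). So the lower bound is not a new computation at all but a direct citation of the earlier lemma applied at $\Delta t_{i,j}^{old}$.

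Honestly there is no serious obstacle here once the identity for $\Delta t_{i,j}^{new}$ is in hand: the theorem is essentially a repackaging of Lemma \ref{lemma1} (for the lower bound) together with the elementary fact that $\exp$ of a non-positive argument is at most $1$ (for the upper bound). The only point demanding a little care is the bookkeeping in the substitution step — correctly propagating the shift $t_j^{new} - t_j^{old} = -B_-$ through the definition of $\Delta t_{i,j}^{new}$ — and verifying that Lemma \ref{lemma1}'s hypothesis $0 < B_- \leq \sigma_-$ is still in force so that the lower bound genuinely applies to every $i \in A_j(t_j)$.
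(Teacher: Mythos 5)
Your proposal is correct, and the upper bound is handled exactly as in the paper: both compute $\Delta t_{i,j}^{new} - \Delta t_{i,j}^{old} = -B_- + B_-\exp\!\left(\frac{-\Delta t_{i,j}^{old}}{\sigma_-}\right) \leq 0$ using the firing-time shift $t_j^{new}-t_j^{old}=-B_-$. The only divergence is in the lower bound: the paper obtains $\Delta t_{i,j}^{new}\geq 0$ by citing Lemma \ref{lemma2} (the last-firing neuron $l$ satisfies $\Delta t_{l,j}^{new}=0$ and remains minimal, so $0=\Delta t_{l,j}^{new}\leq \Delta t_{i,j}^{new}$), whereas you read it off directly from the closed-form identity $\Delta t_{i,j}^{new} = \Delta t_{i,j}^{old} - B_- + B_-\exp\!\left(\frac{-\Delta t_{i,j}^{old}}{\sigma_-}\right)$ together with Lemma \ref{lemma1}, whose statement is precisely the non-negativity of this expression. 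These are equivalent in substance --- the paper's Lemma \ref{lemma2} is itself proved by reducing to the same inequality $\Delta t_{i,j}^{old} - B_- + B_-\exp\!\left(\frac{-\Delta t_{i,j}^{old}}{\sigma_-}\right)\geq 0$ --- but your route is slightly more self-contained, since it makes the dependence on the hypothesis $0 < B_- \leq \sigma_-$ of Lemma \ref{lemma1} explicit and bypasses the intermediate claim about neuron $l$. Note that you still implicitly rely on Lemma \ref{lemma2}, since the shift $t_j^{new}-t_j^{old}=-B_-$ that you substitute is derived from it in the text preceding the theorem; this is fine, but worth acknowledging so the logical dependence is not hidden.
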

\begin{proof}
According to Lemma \ref{lemma2}, we have $0 = \Delta t_{l,j}^{new} \leq \Delta t_{i,j}^{new}$.
So, it is sufficient to show that $\Delta t_{i,j}^{new} \leq \Delta t_{i,j}^{old}$.
To do this, we have :
\begin{align*}
    \Delta t_{i,j}^{new} - \Delta t_{i,j}^{old} &= t_j^{new}- t_i - d_{i,j}^{new} - t_j^{old} + t_i + d_{i,j}^{old}\\ &=t_j^{new} - t_j^{old}+d_{i,j}^{old} - d_{i,j}^{new} \\ 
    &= - B_- + B_-\exp{\frac{-\Delta t_{i,j}^{old}}{\sigma_-}} \leq 0
\end{align*}
\begin{equation*}
    \Rightarrow \Delta t_{i,j}^{new} \leq \Delta t_{i,j}^{old}
\end{equation*}
\end{proof}

This theorem indicates that by repeating the same pattern $P$, the synaptic delays are adjusted in such a way that the temporal difference between the first and last pre-synaptic active neurons decreased. Let $\Delta t_{i,j}^{n}$ denotes the value of $\Delta t_{i,j}$ at step $n$ of applying the delay learning rule for a repeating pattern $P$.

\begin{lemma}
\label{lemma3}
For a repeating pattern $P$ and all $i \in A_j(t_j)$, $\lim_{n\to\infty} \Delta t_{i,j}^n = 0$.
\end{lemma}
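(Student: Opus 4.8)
The plan is to reduce the statement to the convergence of a scalar sequence governed by a simple recurrence, observe that the Theorem already guarantees this sequence is monotone and bounded, and then pin down the limit by a fixed-point argument.

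First I would read off the one-step update directly from the computation in the proof of the Theorem. Fixing $i \in A_j(t_j)$ and writing $x_n = \Delta t_{i,j}^n$ for the value after $n$ presentations of the repeating pattern $P$, that computation gives
$$x_{n+1} = x_n - B_- + B_- \exp{\frac{-x_n}{\sigma_-}},$$
which I would record as $x_{n+1} = g(x_n)$ with $g(x) = x - B_- + B_- e^{-x/\sigma_-}$. The derivation relies on two facts that must hold at \emph{every} step: the last-neuron property of Lemma~\ref{lemma2}, which guarantees the firing-time shift $t_j^{n+1} - t_j^{n} = -B_-$, and the fact that the pre-synaptic firing times $t_i$ are unchanged across repetitions because $P$ repeats identically.

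Next, the Theorem gives $0 \leq x_{n+1} \leq x_n$ for all $n$, so $\{x_n\}$ is non-increasing and bounded below by zero; by the Monotone Convergence Theorem it converges to some $L \geq 0$. Passing to the limit in the recurrence and using continuity of $g$ yields $L = g(L)$, i.e. $B_- e^{-L/\sigma_-} = B_-$, so that $e^{-L/\sigma_-} = 1$ and hence $L = 0$, as claimed.

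The only real obstacle is verifying that the hypotheses underlying the recurrence persist along the whole iteration rather than for a single update. In particular I would check that after each update the active set $A_j(t_j)$ and the identity of the last neuron $l$ remain stable, so that Lemma~\ref{lemma2} applies repeatedly, and that $0 < B_- \leq \sigma_-$, so that Lemma~\ref{lemma1} keeps every $x_n$ nonnegative. Once these are in place, the monotone convergence and the fixed-point identification of the limit are routine.
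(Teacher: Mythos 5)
Your argument is correct and is essentially the paper's own proof: the authors likewise invoke the preceding theorem to get a nonincreasing sequence bounded below by zero, write the same recurrence $\Delta t_{i,j}^{k+1} = \Delta t_{i,j}^k - B_- + B_-\exp(-\Delta t_{i,j}^k/\sigma_-)$, and pass to the limit to conclude $\exp(-L/\sigma_-)=1$, hence $L=0$. Your added remark about checking that the hypotheses of Lemmas \ref{lemma1} and \ref{lemma2} persist at every repetition is a point the paper leaves implicit, but it does not change the route of the proof.
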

\begin{proof}
With respect to the previous theorem, we have $0 \leq \Delta t_{i,j}^{k+1} \leq \Delta t_{i,j}^{k}$. So, the sequence $\Delta t_{i,j}^{k}$ is bounded decreasing and therefore convergent. Now suppose that $\lim_{n\to\infty} \Delta t_{i,j}^n = L$. It is sufficient to show that $L = 0$. To this end, we do as follows:
\begin{align*}
    &\Delta t_{i,j}^{k+1} = \Delta t_{i,j}^k - B_- + B_-\exp{\frac{-\Delta t_{i,j}^k}{\sigma_-}} \\
    {\Longrightarrow} &\lim_{k\to\infty}  \Delta t_{i,j}^{k+1} = \lim_{k\to\infty} \Delta t_{i,j}^{k} - B_- + B_-\exp(\frac{-\lim_{k\to\infty} \Delta t_{i,j}^{k}}{\sigma_-}) \\
    \Longrightarrow &L = L - B_- + B_-\exp(\frac{-L}{\sigma_-})
    \Longrightarrow \exp{\frac{-L}{\sigma_-}} = 1 \Rightarrow L = 0.
\end{align*}
\end{proof}

Finally, when the pre-synaptic neuron $i$ doesn't contribute to the firing of the post-synaptic neuron $j$, we have $\Delta t_{i,j} < 0$. In this case, the delay learning rule increases the synaptic delay and at the same time, $t_j$ is decreased by an amount of $B_-$ for the next repetition of pattern $P$. Therefore, by repeating the same pattern $P$, we have: 
\begin{align*}
    \Delta t_{i,j}^{new} - \Delta t_{i,j}^{old} &= t_j^{new}- t_i - d_{i,j}^{new} - t_j^{old} + t_i + d_{i,j}^{old}\\ &=t_j^{new} - t_j^{old}+d_{i,j}^{old} - d_{i,j}^{new} \\ 
    &= - B_- - B_+\exp{\frac{\Delta t_{i,j}^{old}}{\sigma_+}} \leq 0 \\
     & \Longrightarrow \Delta t_{i,j}^{new} \leq \Delta t_{i,j}^{old}.
\end{align*} 

In other words, by applying the synaptic delay learning rule on the synapse connecting neurons $i$ and $j$, it is less likely that the neuron $i$ contributes to the firing of neuron $j$, in the next repetition of pattern P.

\subsection{Stop Condition for Delay Learning}
The only issue with our proposed delay learning rule is the convergence of all the delays to zero. This means that by repeating a pattern, all the delays corresponding to the pre-synaptic active neurons  keep decreasing until all of them become zero, which results in losing the temporal feature of the learnt pattern. Therefore, a stop condition is required to avoid converging delays to zero (hyper-myelination). The biological evidence shows that we have the same property in myelination process, and this deactivation of myelination process concerns multiple axons \cite{brake}.

For a post-synaptic neuron $j$, we stop the delay learning on $d_{i,j}$ for every $i \in PRE_j$ if:
$$ \exists k \in PRE_j, \; d_{k,j} < c,$$ where $c>B_-$ is a constant.
As a result, no synaptic delay becomes zero and hence the learnt temporal features are kept due to this stop condition.

Finally, we proved in Lemma \ref{lemma3} that after infinite steps of performing our delay learning rule on the synaptic delays, the post-synaptic neuron will be completely fitted on any temporal repeating pattern, but according to the above mentioned stop condition, the delay learning process may terminate before the convergence. To overcome this issue, we add another modulation to the learning process, in which we increase all the synaptic delays by a constant small value at every time step. Note that this modulation doesn't violate any part of the previous theorems and lemmas,  as well as their proofs.

\subsection{Homeostasis Regulation}
Homeostatic synaptic plasticity has been observed experimentally and it may serve to stabilize plasticity mechanisms and regulate the behaviour of the network \cite{homeo}. When dealing with synaptic weights, the homeostasis plasticity helps the neurons to forget their incorrectly learnt spatial patterns and lets them to learn new ones. It also helps to increase the representation diversity among the neurons by preventing a single neuron to  learn multiple patterns. In addition, it adjusts the weights to regularize the firing rate of neurons in the case of low or high average initial weights. To stabilize STDP learning rule, the homeostasis mechanism could be applied as follows:
$$K = \frac{R_{target} - R_{observed}}{R_{target}},\text{ and}$$
$$w_{new} = w_{current} + \lambda_w \times K , $$
where $R_{target}$ and $R_{observed}$ are the expected and observed firing rates of the post-synaptic neuron, respectively, and $\lambda_w$ is the learning rate of homeostasis mechanism. According to this rule, in the case of $R_{observed} > R_{target}$, all the synaptic weights of a post-synaptic neuron are decreased by a constant to make the neuron to be less active and in the case of $R_{target} > R_{observed}$, the synaptic weights are increased to make the neuron to be more active.

Accordingly, similar mechanism could be applied for the synaptic delay learning rule. So, we define the homeostasis mechanism for synaptic delays as follows. 
$$d_{new} = d_{current} - \lambda_d \times K , $$
where $\lambda_d$ is the learning rate corresponding to the homeostasis mechanism for synaptic delays. According to this rule, in the case of $R_{observed} > R_{target}$, all the synaptic delays of a post-synaptic neuron are increased by a constant to make the post-synaptic neuron to fire later than the other neurons and in the case of $R_{target} > R_{observed}$, the synaptic delays are decreased to make it fire sooner than the other neurons.

\section{Results}

In this section, we describe the result of applying our delay learning rule on a spiking neural network in an experimental task to show its ability to learn repeating spatio-temporal patterns. First, we explain the architecture of the employed spiking neural network and the learning and regulation mechanisms used in it. Then, we describe the employed task on this model and illustrate how this model can learn repeating spatio-temporal patterns with the aim of the proposed delay learning rule.

\subsection{Network Architecture}
To show the ability of our delay learning rule, we applied it on a feed forward spiking neural network with two layers. The first layer is a two dimensional array of neurons, each encodes a specific position in the input stimulus. The second one is a convolutional layer with $5\times5$ kernel size and one unit stride in each direction. This layer has four shared-weight (and shared-delay) features. The overall structure of the network is provided in Fig. \ref{fig:a}.

The Leaky Integrate and Fire (LIF) model \cite{LIF} is utilized in the convolutional layer, where a random noise on the membrane potential is also employed. These neurons receive the input signals with respect to the adjusted synaptic delays. Hence, the leakage term of the LIF neuronal model plays an important role in the perception of a temporal pattern. Neurons can emit only one spike during the presentation of each stimulus (first spike coding). Also, neurons have a simple linear adaptation on the threshold value, i.e. the threshold value is increased when the neuron fires and it is decreased otherwise. This adaptation is required to avoid isolation of a neuron due to the initial random weights and delays and it also contributes to forget incorrectly learnt features. Moreover, the lateral inhibition mechanism is employed between the neurons in different feature maps, but at the same location, i.e. each time a neuron fires, it inhibits the neurons at the same location but in different feature maps. The reason for this lateral inhibition is to avoid the same pattern to be learnt by multiple neurons in different feature maps.

\begin{figure}[!h]
\centering
  \includegraphics[width=0.5\linewidth]{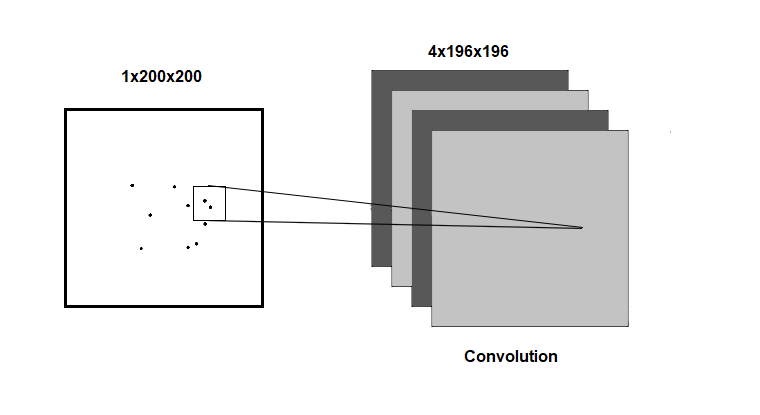}
  \caption{Structure of the network}
  \label{fig:a}
\end{figure}

\subsection{Experiment}
To provide an experimental evidence on the ability of the proposed delay learning rule, we employed the random moving dots task in which the model has to learn the temporal patterns. In this task, there are some moving dots on a two dimensional plane, where some of them move toward a specific direction and the others move randomly. Thereby, our model has to learn the repeating patterns, which is the specific directions that the dots moving toward. 

\begin{figure}[!h]
     \centering
     \begin{subfigure}[b]{0.24\textwidth}
         \centering
         \includegraphics[width=\textwidth]{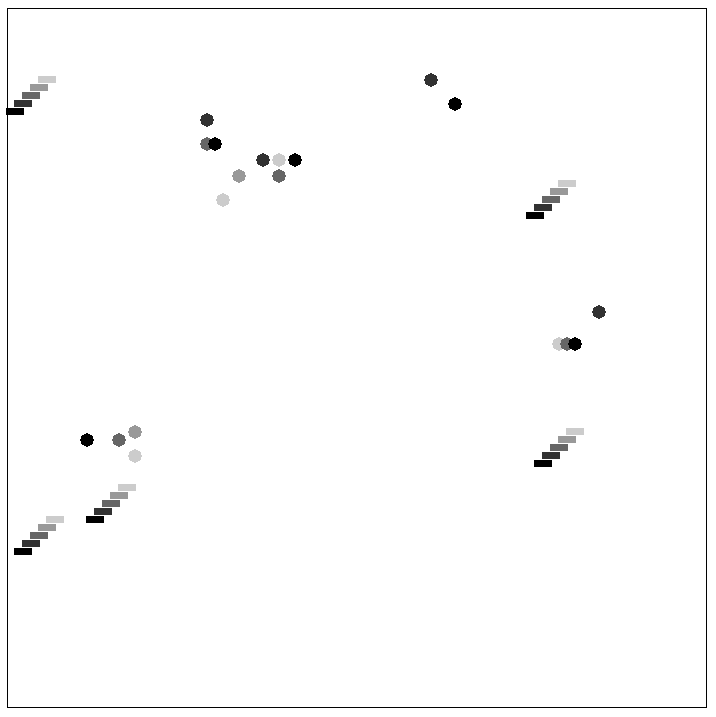}
         \caption{$-135^\circ$ angle}
     \end{subfigure}
     \begin{subfigure}[b]{0.24\textwidth}
         \centering
         \includegraphics[width=\textwidth]{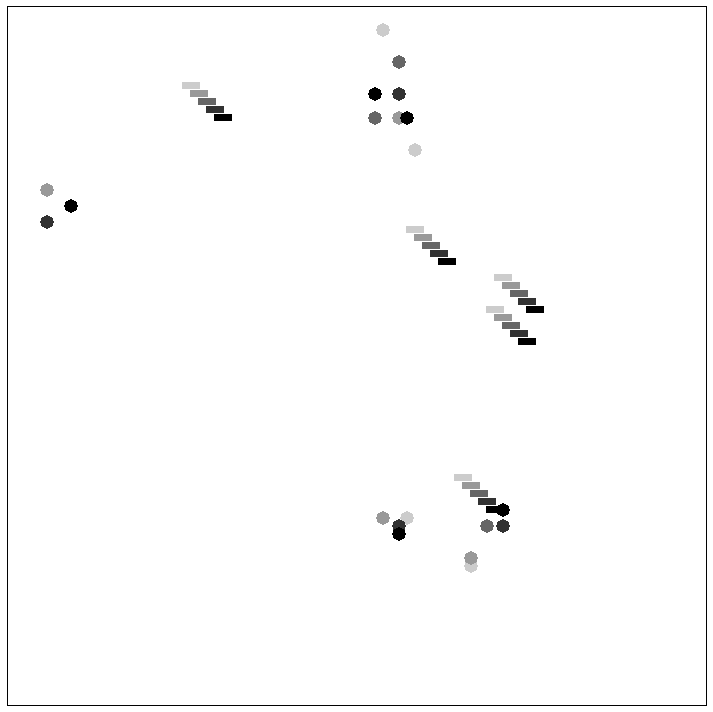}
         \caption{$-45^\circ$ angle}
     \end{subfigure}
     \begin{subfigure}[b]{0.24\textwidth}
         \centering
         \includegraphics[width=\textwidth]{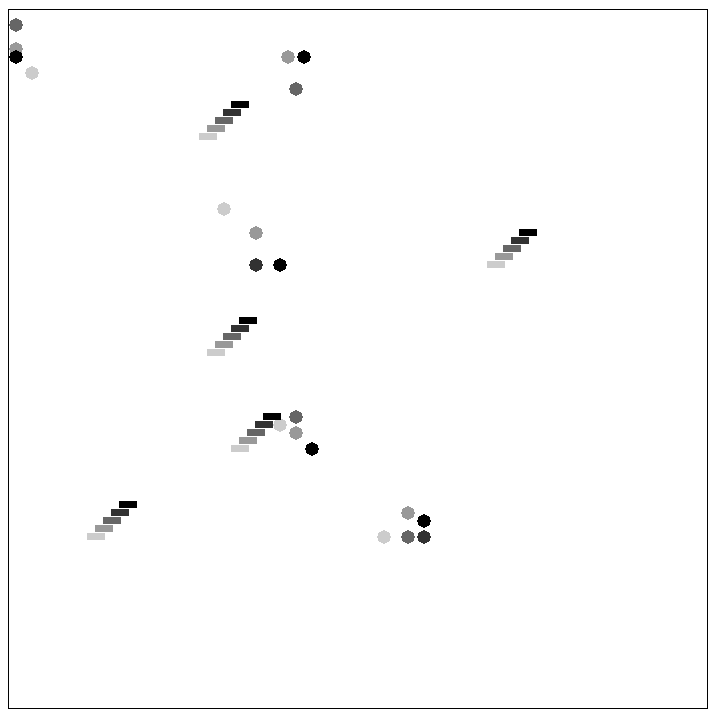}
         \caption{$45^\circ$ angle}
     \end{subfigure}
     \begin{subfigure}[b]{0.24\textwidth}
         \centering
         \includegraphics[width=\textwidth]{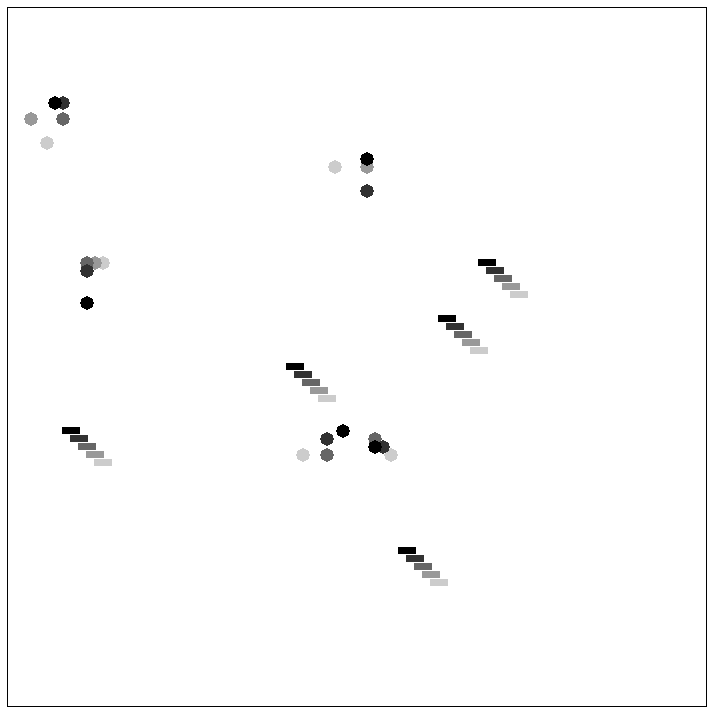}
         \caption{$135^\circ$ angle}
     \end{subfigure}
      \caption{Four samples of the employed dataset. The intensity of each dot shows the location of it over the time (black indicates the most recent location). Plus, the trajectory of each dot illustrates whether it moves randomly or not (circles are the dots that move randomly while the rectangles are those moving toward one of the four diagonal directions).}
        \label{fig:inp}
\end{figure}

The dataset of this experiement was generated by Random Dot Kinematogram of the \emph{PsychoPy} library \cite{psychopy}. This dataset consists of 100 stimuli, where in each stimulus, 10 dots move during five time steps with one time unit gap between successive steps. In each stimulus, half of the dots move toward one of the four diagonal directions ($\pm 45^\circ$ and $\pm 135^\circ$) at a fixed speed and the other half move randomly with different speeds. We expect that our model learn these repeating four types of  movement toward diagonal directions. Four samples of this dataset are provided in Fig. \ref{fig:inp}.

\subsection{Training Phase}
As it was mentioned before, the model has four shared-weight (and shared-delay) features in the convolutional layer. Training the model is continued until all of the features get frozen according to our delay learning stop condition, where the whole training phase took 33 epochs. In each epoch, we presented the whole generated dataset containing 100 stimuli. Updating the synaptic delays right after the firing of a post-synaptic neuron can be misleading for the neurons on the same feature map, since changing their synaptic delays can delude the timing of the current information passing through them. To solve this issue, we tag all neural activities and apply our delay learning rule on the synaptic delays at the end of the stimulus presentation. In addition, before presenting a stimulus to the model, we reset the states of all neurons to their predefined parameter values as they are depicted in Table \ref{table:params}. The learning process of the model is illustrated in Fig. \ref{fig:le} for several epochs. In this figure, the size of the circles represent the synaptic delays in the corresponding feature map and their intensities indicate the synaptic weights. As it can be seen in Fig. \ref{fig:le}, each feature map learnt a direction of moving toward one of the four diagonal directions over the time. 
\begin{table}[h!]
\centering
\begin{tabular}{||c | c  | c ||} 
 \hline
Modulation & Parameter  & Value \\ [0.5ex] 
 \hline\hline
  Neurons & Threshold & 4.15   \\ 
   & $\tau_m$ & 20   \\ 
 \hline
  & $A_+$ & 5.0   \\ 
  & $A_-$ & 5.0   \\ 
  STDP & $\tau_+$ & 0.0001 \\
  & $\tau_+$ & 0.0001 \\
  & Weights & $N(0.95 , 0.05)$ \\
 \hline
 
\end{tabular}
\begin{tabular}{||c | c  | c ||} 
 \hline
Modulation & Parameter  & Value \\ [0.5ex] 
 \hline\hline
  Delay Regulation & $c$ & 0.001 \\
  & Growth factor  & 0.0001 \\
 \hline
  & $B_+$ & 5.0   \\ 
  & $B_-$ & 5.0   \\ 
  Delay learning & $\sigma_+$ & 0.001 \\
  & $\sigma_+$ & 0.001 \\
  & Delays & $N(50, 0.02)$ \\
 \hline
 
\end{tabular}
\caption{Parameters of the model and their predefined values.}
\label{table:params}
\end{table}

The results of this experiment indicate that the proposed delay learning rule could help the model to learn the temporal features (moving dots) in an unsupervised manner. Although the employed task is very simple, but it shows the potential ability of our proposed delay learning rule to extract more complex temporal features.

\begin{figure}[!h]
\centering
  \includegraphics[width=1\linewidth]{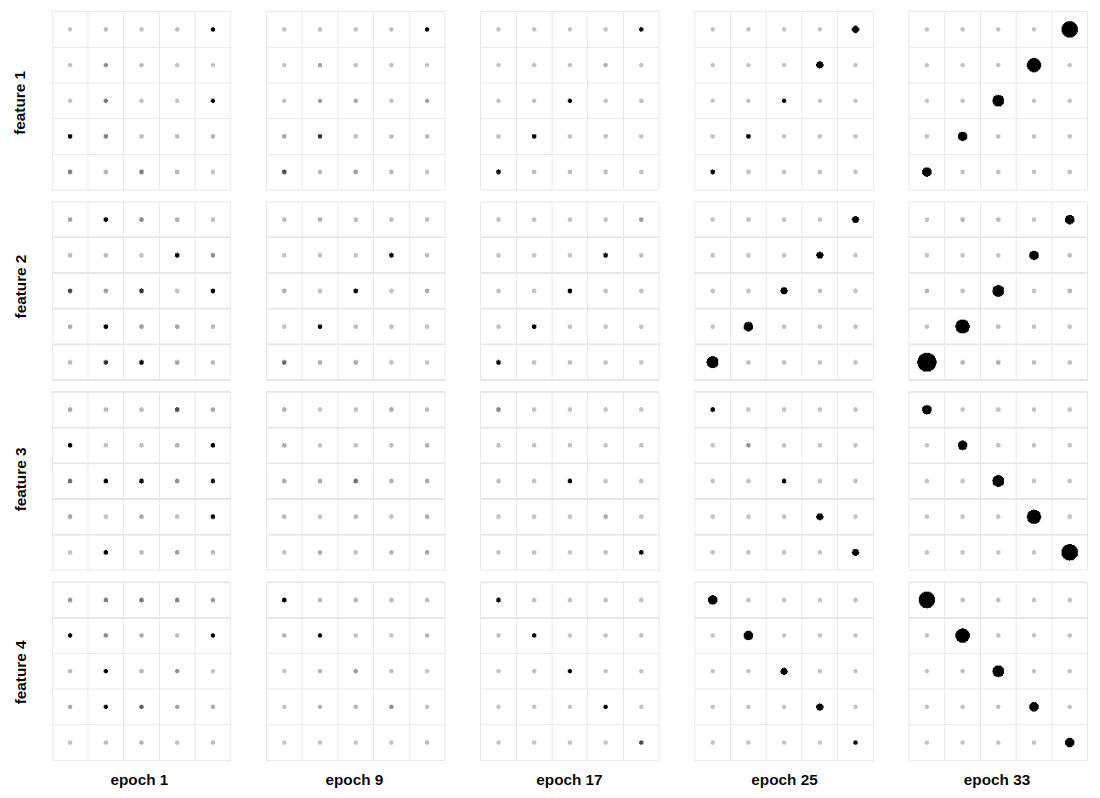}
  \caption{The learnt spatio-temporal features over the epochs. The size (intensity) of each circle represents the delay (weight) of its corresponding synapse, where the bigger circle illustrates lower synaptic delay (black indicates the higher synaptic weight).}
  \label{fig:le}
\end{figure}

\section*{Discussion}
The modulation of conduction delay between neurons is an essential ability of the brain in the learning of temporal patterns. However, the exact underlying mechanisms in the brain for the precise adjustment of the synaptic delays remains an open problem. In this paper, we introduced a biologically plausible learning rule that provides an answer to this question from a computational point of view. To support our solution, we provided some mathematical proofs showing that a post-synaptic neuron, receiving a repeating spatio-temporal pattern, can precisely adjust its synaptic delays so that it responds faster and more accurately in the next repetition of the pattern. Then, we show that an STDP-based spiking neural network equipped with our proposed delay learning rule could effectively learn the temporal features in the moving dots experiment. 

Although biological studies confirm the fundamental role of conduction delays in learning \cite{myelin_piano, bio_delay_evidence}, most of the brain-inspired computational models neglect them to avoid any delay-related complexities. However, in this paper, we showed that adding the delays and employing our proposed delay learning rule in a spiking neural network can give a single neuron the ability to learn specific temporal patterns. Adding this property to the spiking neural networks can help to relax delay-related complexities without losing their benefits. Looking from a mathematical point of view, we already know that adding delays to spiking neural networks can result in increasing the dimensionality of those models and thus the advantage of their unprecedented information capacity \cite{polychron}. However, the gap of a practical method for utilizing the delays to use their learning capacity has overthrown their usability. This fact emphasizes the importance of an effective delay learning rule and how it can expand the capability of the future computational models.




\end{document}